\documentclass[letterpaper,10pt,conference]{ieeeconf}
\IEEEoverridecommandlockouts       
\usepackage{amssymb}
\usepackage{amsmath}

\usepackage{amsthm}
\usepackage[english]{babel}
\usepackage{float}
\usepackage{graphicx}
\usepackage{hyperref}
\usepackage{mathtools}
\usepackage{booktabs}
\usepackage{afterpage}
\usepackage{dblfloatfix}
\usepackage{colortbl}

\usepackage{filecontents}
\usepackage[noadjust]{cite}
\usepackage[font=footnotesize,labelfont=footnotesize]{subcaption}
\usepackage[font=footnotesize,labelfont=footnotesize]{caption}
\usepackage{comment}
\usepackage{authblk}
\usepackage{multirow}
\usepackage{xcolor}
\usepackage{algorithm}
\usepackage{algorithmic}
\usepackage{xspace}
\usepackage{graphicx}
\usepackage[english]{babel}

\DeclareMathOperator*{\argmin}{arg\,min}

\definecolor{darkblue}{rgb}{0.15,0.15,0.55}
\definecolor{lightgrey}{rgb}{0.75,0.75,0.75}

\newtheorem{theorem}{Theorem}

\newtheorem{lemma}[theorem]{Lemma}
\theoremstyle{definition}
\newtheorem{definition}{Definition}

\newcommand{\algcomment}[1]{%
    \nobreak\hspace{1em}%
    {\scriptsize\color{gray}\textsf{// #1}}%
}

\begin{document}
\title{
\LARGE \bf 
{A Capability-Aware Backup-Recoverability Filter\\ for Heterogeneous Multi-Robot Navigation}
\vspace{-12pt}}

\author{Joonkyung Kim$^1$, Yanze Zhang$^2$, Wenhao Luo$^1$, Yiwei Lyu$^{1,*}$
\thanks{$^{1}$Department of Computer Science and Engineering, Texas A\&M University, $^{2}$Department of Computer Science, University of Illinois Chicago, $^*$Corresponding author: {\tt\small yiweilyu@tamu.edu}}
}


\maketitle 

\begin{abstract}

Safe navigation in heterogeneous multi-robot teams faces two coupled
challenges: differences in robots' braking and turning capabilities
limit how each robot can respond to an approaching neighbor, while
simultaneous interactions require a single control action to satisfy
multiple collision-avoidance constraints.
Together, these challenges can make local safety constraints
incompatible with a robot's admissible controls, leaving a
decentralized safety controller without a feasible action.
We propose a Capability-Aware Backup-Recoverability Filter (CA-BRF),
based on the insight that navigation actions should preserve access to
a certified recovery maneuver.
First, we construct a backup-recoverable set in a shared Cartesian
operational space by evaluating whether prescribed backup maneuvers
maintain pairwise safety until the robots come to rest.
Each maneuver respects the corresponding robot’s kinematic and actuation constraints, making the recovery certificate capability-dependent. Second, CA-BRF filters candidate controls by selecting admissible
interpolations between candidate and backup actions while coordinating
the selection across simultaneous interactions.
This preserves backup recoverability while retaining as much of the
candidate action as possible and ensures that all pairwise recovery
conditions are satisfied by a single executable control for each robot.
Together, we establish recursive feasibility and safety for the discrete-time closed-loop system even when the nominal safety-constrained optimization becomes infeasible. The validity and effectiveness of the proposed CA-BRF are validated in simulations with up to 30 heterogeneous robots and experiments of robots with five different kinematic classes. \vspace{-3pt}

\end{abstract}

\section{Introduction}
\label{sec:intro}
\vspace{-0.25em}

\begin{figure}[!t]
    \captionsetup{skip=2pt}
    \centering

    \includegraphics[width=0.35\columnwidth]{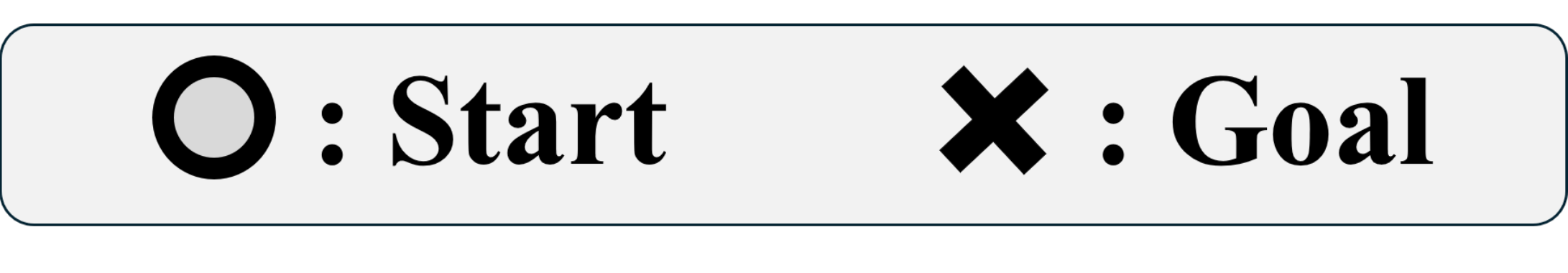}
    \vspace{0.0em}

    \begin{subfigure}[t]{0.38\columnwidth}
        \centering
        \includegraphics[width=\linewidth]{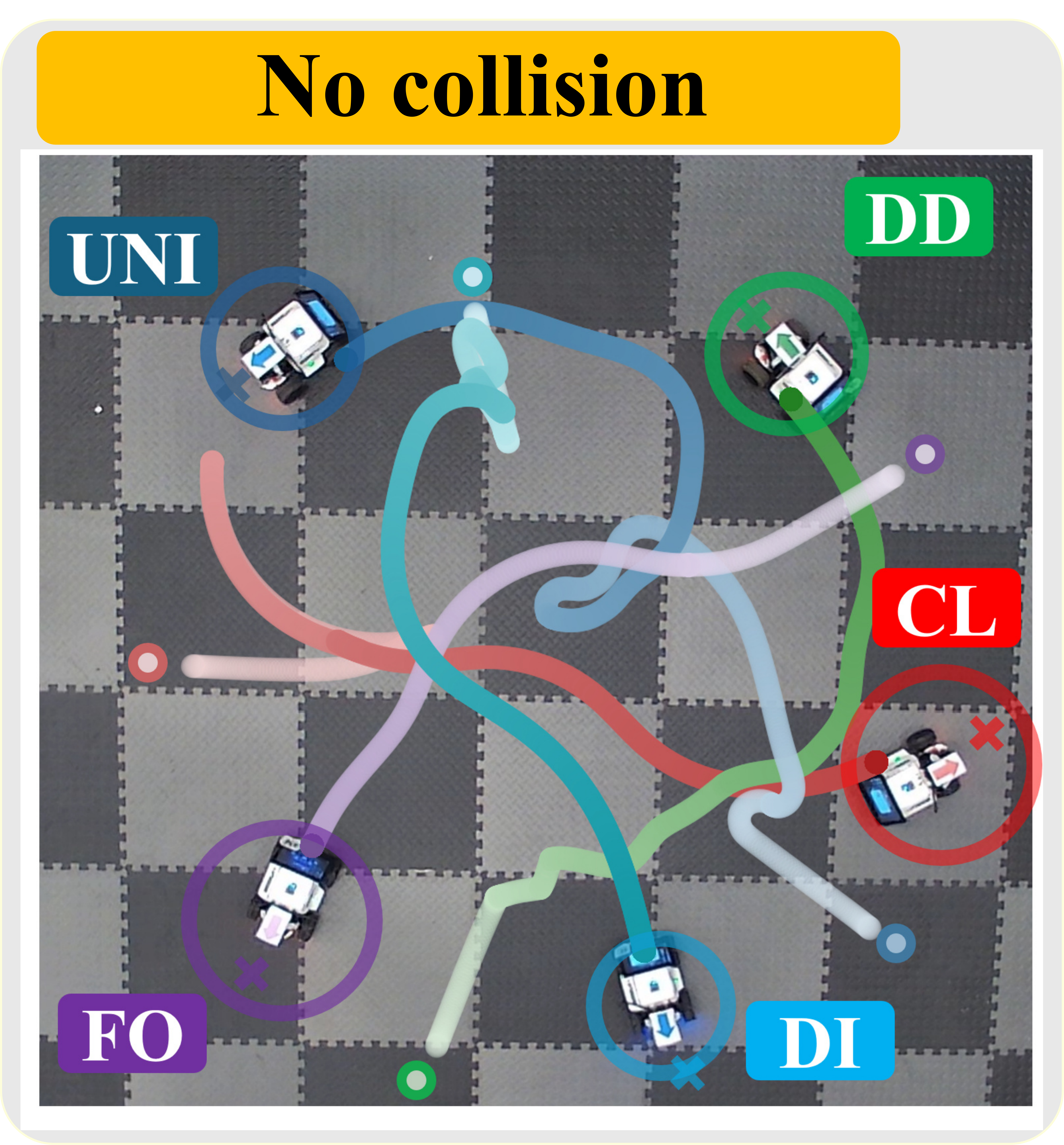}
        \caption{CA-BRF (Ours).}
        \label{fig:real_robot_cabrf}
    \end{subfigure}\hspace{6pt}
    \begin{subfigure}[t]{0.38\columnwidth}
        \centering
        \includegraphics[width=\linewidth]{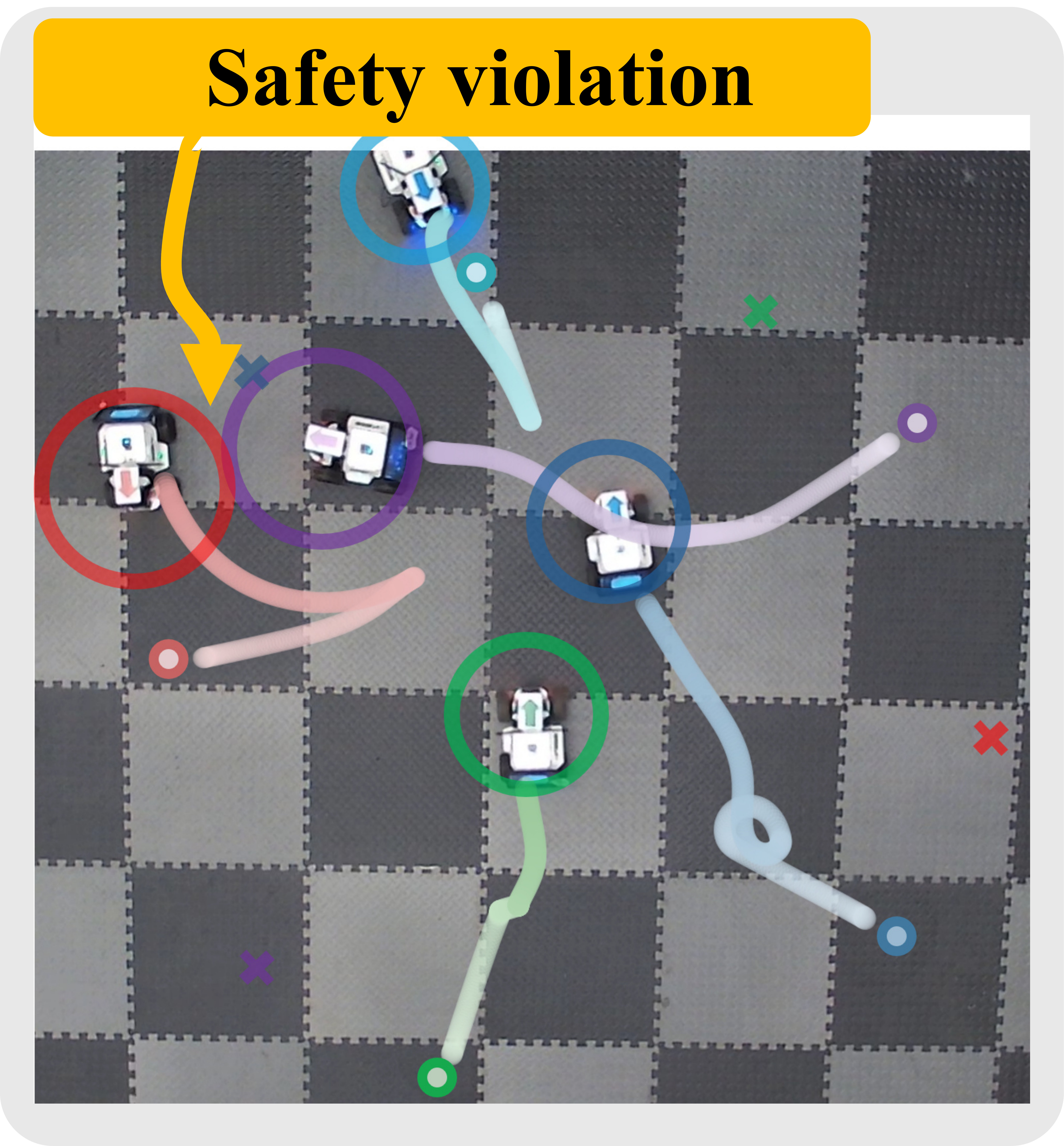}
        \caption{HOCBF only (Baseline).}
        \label{fig:real_robot_hocbf}
    \end{subfigure}

    \vspace{0.2em}

    \begin{subfigure}[t]{\columnwidth}
        \centering
        \includegraphics[width=0.99\linewidth]{figures/fig_1_c.pdf}
        \caption{Backup first-layer margin $B_{ij}^{\psi}$ for the CL--FO pair (Ours).}
        \label{fig:new_panel}
    \end{subfigure}

    \vspace{0.2em}

    \begin{subfigure}[t]{\columnwidth}
        \centering
        \includegraphics[width=0.99\linewidth]{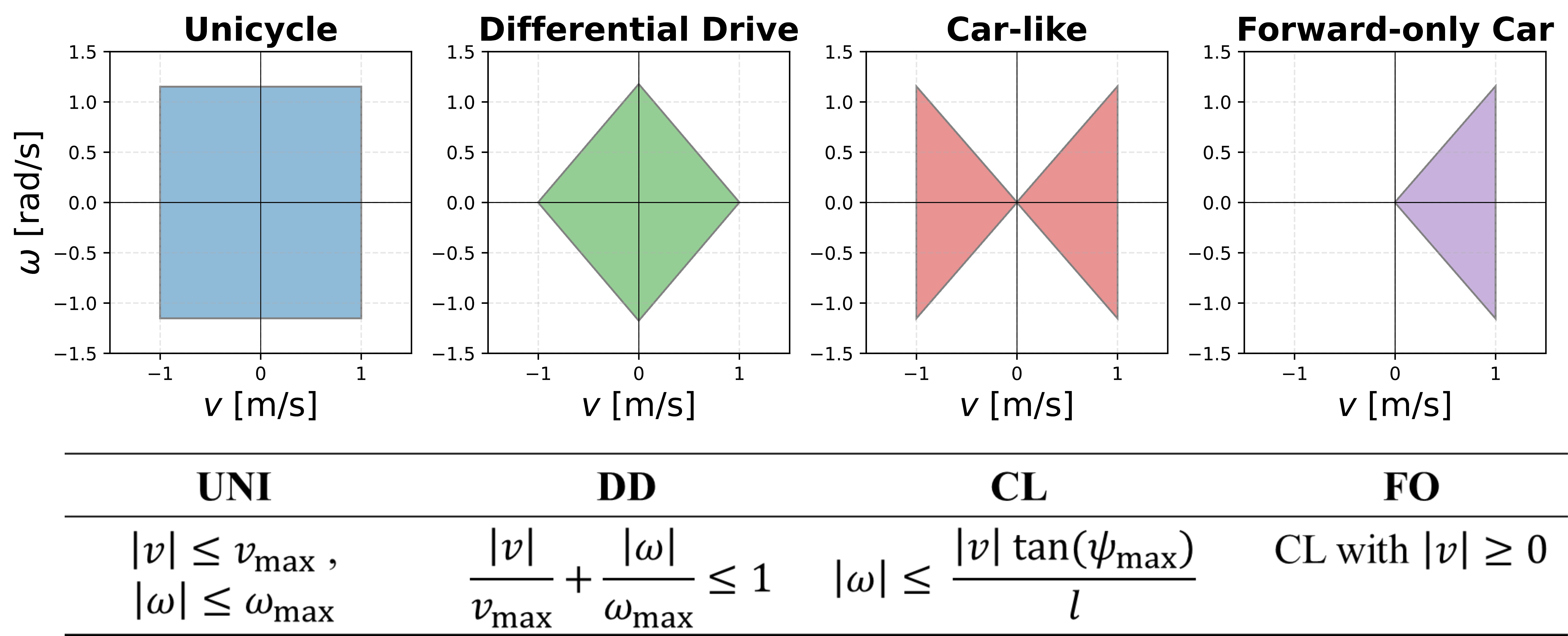}
        \caption{Feasible $(v,\omega)$ control sets for heterogeneous models.}
        \label{fig:control_sets}
    \end{subfigure}

    \caption{
    Real-robot trajectory comparison, recovery-certificate margin, and model-dependent velocity sets. Five heterogeneous robots (DI, UNI, DD, CL, and FO) navigate to individual goals. \textbf{(a)} With CA-BRF (ours), the robots navigate successfully while maintaining the prescribed pairwise safety condition. \textbf{(b)} Directly applying the HOCBF candidate controls without CA-BRF violates this condition. Darker trajectory colors indicate later times. \textbf{(c)} Backup safety margin $B_{ij}^{\psi}(\mathbf{x})$ for the CL--FO pair under the candidate and CA-BRF applied controls. Values below zero indicate that the prescribed backup maneuvers no longer certify safe separation. The candidate control crosses below zero, whereas the CA-BRF applied control remains nonnegative, preserving backup recoverability. The shaded interval marks the active CL--FO interaction during which CA-BRF modifies the candidate control. \textbf{(d)} Feasible $(v,\omega)$ velocity sets and constraints for four representative nonholonomic kinematic classes~\cite{lynch2017modern_robotics}; DI is omitted as holonomic.  
    }
    \label{fig:overview} \vspace{-2.5em}
\end{figure}

Heterogeneous robot teams combine complementary capabilities for
applications such as logistics~\cite{li2021lifelong},
exploration~\cite{gao2020random}, and inspection~\cite{tranzatto2022cerberus}.
Safe navigation in shared workspaces faces two coupled challenges:
kinematic and actuation constraints restrict each robot's available
responses~\cite{wang2016safety, agrawal2021safe}, while simultaneous
encounters require compatible actions across several collision-avoidance
constraints. For example, braking may help a car-like robot avoid a
robot ahead, but its resulting loss of turning authority can limit its
response to another robot crossing its path~\cite{deluca2002control};
a differential-drive robot has different options because it can rotate
in place. Safe coordination must therefore account for both each
robot's state-dependent physical capability and the compatibility of
actions selected across its interactions.


Control barrier functions (CBFs) provide a principled approach to
safety enforcement by restricting desired controls to actions that
preserve a specified safe set~\cite{ames2019control}.
Higher-order CBFs (HOCBFs) extend this approach to requirements whose
dependence on control appears through higher derivatives, as when
acceleration regulates the separation between robots~\cite{xiao2019hocbf}.
Using robot-specific dynamics and admissible control sets, these
formulations can express safety requirements for systems with different
kinematic and actuation constraints~\cite{glotfelter2019hybrid}.
However, enforcing several pairwise barrier conditions requires a
common admissible control satisfying all of them.
Higher-order barrier inequalities can conflict with actuator limits~\cite{breeden2021high}, and local constraints constructed for separate encounters can be individually feasible yet jointly incompatible
~\cite{tan2022compatibility, isaly2024feasibility}.
Consequently, the feasibility of each pairwise condition does not
establish the feasibility of their combined safety controller.

Existing methods address this issue by adapting barrier parameters or
decay rates~\cite{xiao2021adaptive, liu2023auxiliary, zeng2021safety_decay},
relaxing or prioritizing selected constraints~\cite{lee2023hierarchical},
or allocating avoidance responsibility across agents
~\cite{lyu2023risk, lyu2022responsibility}.
Such mechanisms can enlarge the feasible control region or reduce
unnecessary demands on a constrained robot, but they do not by
themselves ensure that the resulting requirements admit one compatible
control across simultaneous encounters.
Joint or distributed optimization can coordinate these requirements
explicitly~\cite{tan2021distributed}, but a solution still requires a
feasible intersection of the imposed constraints and admissible
controls.  In particular, a robot may receive individually achievable
obligations that require conflicting actions when combined~\cite{jankovic2021liveness}.

{
An alternative is to certify a known safe continuation rather than
rely only on the instantaneous feasibility of barrier inequalities.
Backup-based methods identify states from which a prescribed controller
maintains safety and reaches an invariant terminal set
\cite{gurriet2018online}; backup CBFs preserve this property through
control constraints \cite{chen2021backup}. This principle has been
extended to decentralized collision avoidance under limited actuation
\cite{chen2020guaranteed_backup}.
For heterogeneous robots, however, the backup must remain admissible
under each robot's kinematic and actuation constraints; a generic
maneuver may be incompatible with restricted steering or forward-only
motion \cite{singletary2022safe}. Pairwise backup checks must also be
composed consistently across simultaneous encounters: certifying
different filtered actions for one robot does not certify the action it
executes. Predictive safety filters search more flexibly for safe
continuations \cite{wabersich2021predictive}, but their online
optimization scales with the prediction horizon and number of coupled
robots \cite{tearle2021predictive, park2020efficient}.
}

This raises the central question of our work: \textit{how can robots continue navigating while ensuring that an admissible recovery maneuver remains available throughout their interactions?}
A collision-free configuration may already be unsuitable for the prescribed backups if the robots are approaching too rapidly. The recovery condition must therefore account for motion along the backup trajectories rather than current separation alone.


To this end, we propose a Capability-Aware Backup-Recoverability Filter
(CA-BRF) that certifies model-specific backup maneuvers and consistently
filters desired controls to preserve their availability across
simultaneous interactions.
Our \textbf{main contributions} are:
1) \textit{Capability-dependent recovery certification.}
We develop a recovery certificate that evaluates whether heterogeneous
robots can execute their prescribed backup maneuvers while maintaining
pairwise safety until reaching rest.
By incorporating model-specific braking and turning responses into a
shared operational-space formulation, the certificate identifies a
backup-recoverable set determined by the robots' physical capabilities
and current motion;
2) \textit{Recoverability-preserving control selection.}
We design a safety filter that checks admissible interpolations between
desired and backup controls and selects the largest verified candidate
fraction that retains the recovery certificate.
The full backup serves as an explicit fallback when the candidate HOCBF optimization does not return a feasible action;
3) \textit{Consistent multi-robot filtering with recursive feasibility
and safety guarantees.}
We develop a component-wise selection mechanism that makes the
pairwise recovery certificates jointly valid under one executable
control per robot. Incorporating this consistency into the analysis, we establish
recursive feasibility of CA-BRF and safety for the discrete-time
multi-robot closed-loop system under the stated conditions.
\vspace{-10pt}

\section{Problem Formulation}
\label{sec:problem} \vspace{-3pt}




\subsection{Heterogeneous Robot Dynamics and Admissible Control}
\label{subsec:system}

Consider a team of $N$ heterogeneous mobile robots operating in a
common two-dimensional workspace, indexed by
$\mathcal I:=\{1,\ldots,N\}$.
Robot $i$ is associated with model $c_i\in\mathcal C$ and evolves as \vspace{-9pt}
\begin{equation}
    \dot{\mathbf x}_i
    =
    \mathbf f_{c_i}(\mathbf x_i)
    +
    \mathbf g_{c_i}(\mathbf x_i)\mathbf u_i,
    \quad
    \mathbf u_i\in\mathcal U_i(\mathbf x_i),
    \label{eq:robot_dynamics}
\end{equation}
where $\mathbf x_i$ and $\mathbf u_i$ denote its state and control
input.
The state-dependent set $\mathcal U_i(\mathbf x_i)$ contains the
controls that can be physically executed,
accounting for both actuation limits and model-specific kinematic
constraints.
We refer to the motion responses permitted by these dynamics and constraints as
the robot's \emph{motion capabilities}.

In this work, we consider five representative mobile-robot models: double
integrator (DI), unicycle (UNI), differential drive (DD), car-like
(CL), and forward-only car-like (FO)~\cite{lynch2017modern_robotics}.
For the nonholonomic models,
$\boldsymbol\nu_i=[v_i,\omega_i]^\top$ denotes the forward and angular
velocity, and $\mathbf u_i:=\dot{\boldsymbol\nu}_i$ is the
acceleration-level control.
Each model has a feasible velocity set $\mathcal V_{c_i}$
(Fig.~\ref{fig:control_sets}).
In particular, CL couples forward and angular motion through its
steering constraint, whereas FO additionally prohibits reverse motion.

Controls are updated at $t_k=k\Delta t$, with one-step state transition
$\mathbf x_i^{k+1}
    =
    F_i(\mathbf x_i^k,\mathbf u_i^k).$
Let $\mathcal A_i(\mathbf x_i)$ be the actuator-feasible acceleration
set. We define the admissible control set by \vspace{-9pt}
\begin{equation}
    \mathcal U_i(\mathbf x_i)
    :=
    \mathcal A_i(\mathbf x_i)
    \cap
    \left\{
        \mathbf u_i
        \;\middle|\;
        \boldsymbol\nu_i+\Delta t\,\mathbf u_i
        \in\mathcal V_{c_i}
    \right\},
    \label{eq:admissible_control_set}
\end{equation}
which enforces both the current acceleration limit and the
next-step velocity constraint.
For DI, $\boldsymbol\nu_i=[v_{x,i},v_{y,i}]^\top$ denotes the Cartesian
velocity.

Let $\mathcal X_i^{\rm adm}$ denote the corresponding admissible state
domain, and assume that
$F_i(\mathbf x_i,\mathbf u_i)\in\mathcal X_i^{\rm adm}$ for all
$\mathbf x_i\in\mathcal X_i^{\rm adm}$ and
$\mathbf u_i\in\mathcal U_i(\mathbf x_i)$.
This condition ensures that an admissible control does not drive the
robot outside its model-specific kinematic domain. \vspace{-6pt}

\subsection{Shared Operational Space and Pairwise Safety}
\label{subsec:operational_safety}

Although the robots have different state and control representations, collision avoidance requires a common geometric description.
For each nonholonomic robot, we define an offset operational point~\cite{lynch2017modern_robotics}
\begin{equation}
    \mathbf p_i
    =
    \begin{bmatrix}
        x_i+x_{r,c_i}\cos\phi_i\\
        y_i+x_{r,c_i}\sin\phi_i
    \end{bmatrix},
    \label{eq:operational_point}
\end{equation}
where $x_{r,c_i}>0$ is the model-dependent offset from the body
position along heading $\phi_i$.
Its operational velocity is
\begin{equation}
    \dot{\mathbf p}_i
    =
    \mathbf J_i(\phi_i)\boldsymbol\nu_i,
    \quad
    \mathbf J_i(\phi_i)
    =
    \begin{bmatrix}
        \cos\phi_i &
        -x_{r,c_i}\sin\phi_i\\
        \sin\phi_i &
         x_{r,c_i}\cos\phi_i
    \end{bmatrix}.
    \label{eq:operational_velocity}
\end{equation}
Differentiating gives \vspace{-3pt}
\begin{equation}
    \ddot{\mathbf p}_i
    =
    \boldsymbol\eta_i(\mathbf x_i)
    +
    \mathbf G_i(\mathbf x_i)\mathbf u_i,
    \label{eq:operational_dynamics}
\end{equation}
where $\boldsymbol\eta_i=\dot{\mathbf J}_i\boldsymbol\nu_i$ and
$\mathbf G_i=\mathbf J_i$.
For DI, we use its Cartesian position directly, giving
$x_{r,c_i}=0$, $\boldsymbol\eta_i=\mathbf0$, and
$\mathbf G_i=\mathbf I_2$.
Hence, the heterogeneous robots share the operational form
\eqref{eq:operational_dynamics}, while their different motion
capabilities remain encoded through
$\boldsymbol\eta_i$, $\mathbf G_i$, and
$\mathcal U_i(\mathbf x_i)$.

We next define safety in this shared space.
Let $\mathbf c_i\in\mathbb R^2$ be the body-center position of robot
$i$, and let $r_i^{\rm phys}$ be the maximum distance from
$\mathbf c_i$ to its physical body.
Since $\|\mathbf p_i-\mathbf c_i\|=x_{r,c_i}$, define
$r_i:=r_i^{\rm phys}+x_{r,c_i}$ and $R_{ij}:=r_i+r_j$.
By the triangle inequality,
$\|\mathbf c_i-\mathbf c_j\|
\geq\|\mathbf p_i-\mathbf p_j\|-x_{r,c_i}-x_{r,c_j}$;
therefore, $\|\mathbf p_i-\mathbf p_j\|\geq R_{ij}$ is sufficient to
prevent overlap of the physical robot bodies.
Let $\mathbf x:=[\mathbf x_1^\top,\ldots,\mathbf x_N^\top]^\top$
denote the joint state and
$\Delta\mathbf p_{ij}:=\mathbf p_i-\mathbf p_j$. We define the pairwise safety function and safe set as
\begin{equation}
    h_{ij}(\mathbf x)
    :=
    \|\Delta\mathbf p_{ij}\|^2-R_{ij}^2,
    \quad
    \mathcal C_{ij}
    :=
    \{\mathbf x\mid h_{ij}(\mathbf x)\geq0\}.
    \label{eq:pairwise_safety_function}
\end{equation}
The safety objective is to maintain
$h_{ij}(\mathbf x^k)\geq0$ for all $i<j$ and all control updates $k$.

\subsection{HOCBF-Based Candidate Controller}
\label{subsec:hocbf_nominal_controller}

We first construct a decentralized HOCBF-based candidate controller
that provides the desired navigation action for CA-BRF.
Because $h_{ij}$ depends on position while
$\mathbf u_i$ and $\mathbf u_j$ regulate acceleration through
\eqref{eq:operational_dynamics}, $h_{ij}$ has relative degree two.
Let $\Delta\dot{\mathbf p}_{ij}
:=\dot{\mathbf p}_i-\dot{\mathbf p}_j$, so that
$\dot h_{ij}
=2\Delta\mathbf p_{ij}^{\top}\Delta\dot{\mathbf p}_{ij}$.
For gains $\lambda_1,\lambda_2>0$, define the first HOCBF layer
$\psi_{ij}^{H}:=\dot h_{ij}+\lambda_1h_{ij}$.
Enforcing
$\dot\psi_{ij}^{H}+\lambda_2\psi_{ij}^{H}\geq0$ gives
\begin{equation}
    \ddot h_{ij}
    +
    (\lambda_1+\lambda_2)\dot h_{ij}
    +
    \lambda_1\lambda_2h_{ij}
    \geq0.
    \label{eq:hocbf_condition}
\end{equation}

Substituting~\eqref{eq:operational_dynamics} into
\eqref{eq:hocbf_condition} yields the coupled pairwise condition
\begin{equation}
    2\Delta\mathbf p_{ij}^{\top}
    \left(
        \mathbf G_i\mathbf u_i-\mathbf G_j\mathbf u_j
    \right)
    +
    2\Delta\mathbf p_{ij}^{\top}
    \left(
        \boldsymbol\eta_i-\boldsymbol\eta_j
    \right)
    +
    \Upsilon_{ij}
    \geq0,
    \label{eq:joint_hocbf_condition}
\end{equation}
where
$\Upsilon_{ij}:=
2\|\Delta\dot{\mathbf p}_{ij}\|^2
+(\lambda_1+\lambda_2)\dot h_{ij}
+\lambda_1\lambda_2h_{ij}$.
To decentralize this condition, we assign equal responsibility to the
two robots. Robot $i$ enforces
$2\Delta\mathbf p_{ij}^{\top}\mathbf G_i\mathbf u_i
    \geq
    -2\Delta\mathbf p_{ij}^{\top}\boldsymbol\eta_i
    -\frac{1}{2}\Upsilon_{ij},$
while robot $j$ applies the symmetric condition. Adding the two local
conditions recovers~\eqref{eq:joint_hocbf_condition}.

Let
$\mathbf a_{ij}^{\top}:=
2\Delta\mathbf p_{ij}^{\top}\mathbf G_i$ and
$b_{ij}:=
-2\Delta\mathbf p_{ij}^{\top}\boldsymbol\eta_i
-\frac{1}{2}\Upsilon_{ij}$.
Given a nominal navigation command $\mathbf u_i^{\rm nom}$ and
neighbor set $\mathcal N_i$, robot $i$ computes
\begin{equation}
\begin{aligned}
    \widetilde{\mathbf u}_i^{\rm cand}
    \in
    \argmin_{\mathbf u_i}\quad&
    \|\mathbf u_i-\mathbf u_i^{\rm nom}\|_2^2\\
    \mathrm{s.t.}\quad&
    \mathbf u_i\in\mathcal U_i(\mathbf x_i),\\
    &
    \mathbf a_{ij}^{\top}\mathbf u_i
    \geq b_{ij},
    \quad \forall j\in\mathcal N_i .
\end{aligned}
\label{eq:hocbf_candidate_qp}
\end{equation}
For a nonconvex admissible set, the problem is solved over its convex
branches and the feasible solution with the smallest objective is
selected.
When the QP does not return a solution, a braking-based fallback
$\mathbf u_i^{\rm fb}$ is used to keep the candidate controller
defined.
In either case, the resulting raw command is projected onto the exact
admissible set:
\begin{equation}
    \mathbf u_i^{\rm cand}
    :=
    \Pi_{\mathcal U_i(\mathbf x_i)}
    \left(
        \widehat{\mathbf u}_i^{\rm cand}
    \right),
    \label{eq:candidate_projection}
\end{equation}
where \vspace{-3pt}
\begin{equation}
    \widehat{\mathbf u}_i^{\rm cand}
    :=
    \begin{cases}
        \widetilde{\mathbf u}_i^{\rm cand},
        & \text{if \eqref{eq:hocbf_candidate_qp} returns a solution},\\
        \mathbf u_i^{\rm fb},
        & \text{otherwise}.
    \end{cases}
    \label{eq:raw_candidate}
\end{equation}
Hence,
$\mathbf u_i^{\rm cand}\in\mathcal U_i(\mathbf x_i)$.

Importantly, $\mathbf u_i^{\rm cand}$ is a desired navigation
action rather than the final safety certificate.
Multiple local HOCBF rows can be mutually incompatible, and neither
the fallback nor the final projection is required to preserve every
row in \eqref{eq:hocbf_candidate_qp}.
The final safety guarantee developed below therefore does not require
the candidate HOCBF-QP to remain feasible.

\subsection{Filtering Objectives}
\label{subsec:filtering_objectives}

The preceding controller specifies what each robot would prefer to
execute, but it does not resolve the feasibility loss caused by
simultaneous interactions.
We therefore seek an applied control $\mathbf u^{\rm app} = \mathcal F \left(\mathbf x, \mathbf u^{\rm cand}\right),$ 
where
$\mathbf u^{\rm cand}
=
(\mathbf u_1^{\rm cand},\ldots,\mathbf u_N^{\rm cand})$.
The filter is designed around three requirements:
(i) Recovery must be evaluated using backup maneuvers that respect
each robot's model- and state-dependent admissible motion.
(ii) An accepted navigation action must leave the successor state recoverable, rather than merely being collision-free at the current step.
(iii) When a robot participates in several interactions, all
pairwise recovery conditions must be certified under the same control
that the robot actually executes.
The next section develops CA-BRF to satisfy these requirements.

\section{Capability-Aware Backup-Recoverability Filter}
\label{sec:cabrf}


In this section, we introduce the \textbf{Capability-Aware Backup-Recoverability Filter (CA-BRF)}, a decentralized filter that certifies recoverability under model-specific backups and consistently
filters desired controls to preserve it across multi-robot interactions.
Our approach is guided by a simple principle: an action is
accepted only when it preserves access to an executable recovery
maneuver. CA-BRF thus connects the desired HOCBF candidate with a
model-specific backup and determines how much of the candidate can be
executed without losing recoverability. Algorithm~\ref{alg:cabrf} summarizes this procedure. \vspace{-3pt}


\begin{algorithm}[t]\footnotesize
\caption{Capability-Aware Backup-Recoverability Filter (CA-BRF)} 
\label{alg:cabrf}
\begin{algorithmic}[1]
\REQUIRE States $\{\mathbf x_i\}$, nominal controls
$\{\mathbf u_i^{\rm nom}\}$
\ENSURE Applied controls $\{\mathbf u_i^{\rm app}\}$

\FOR{each robot $i$}
    \STATE Compute $\mathbf u_i^{\rm cand}$ using
    \eqref{eq:hocbf_candidate_qp}--\eqref{eq:candidate_projection}
    \algcomment{desired candidate}
    \STATE Compute
    $\mathbf u_i^b=\pi_i^b(\mathbf x_i)$
    \algcomment{admissible backup}
\ENDFOR

\STATE Initialize
$\mathbf u_i^{\rm app}\leftarrow\mathbf u_i^{\rm cand}$,
$\forall i$
\STATE Build active graph $\mathcal G$ and its connected components
\STATE
$\Lambda=\{1=\lambda_0>\cdots>\lambda_M=0\}$
\algcomment{includes full backup}

\FOR{each connected component $\mathcal K$ with
      $\mathcal E_{\mathcal K}\neq\varnothing$}
    \FOR{$\lambda\in\Lambda$ in descending order}
        \STATE
        $\mathbf u_i(\lambda)
        \leftarrow
        \mathbf u_i^b+
        \lambda(\mathbf u_i^{\rm cand}-\mathbf u_i^b)$,
        $\forall i\in\mathcal K$
        \algcomment{interpolate}

        \STATE Check
        $\mathcal M_{\mathcal K}(\lambda)$
        \algcomment{admissibility + recoverability}

        \IF{$\mathcal M_{\mathcal K}(\lambda)=1$}
            \STATE $\lambda_{\mathcal K}\leftarrow\lambda$
            \algcomment{largest verified fraction}
            \STATE \textbf{break}
        \ENDIF
    \ENDFOR

    \STATE
    $\mathbf u_i^{\rm app}
    \leftarrow
    \mathbf u_i(\lambda_{\mathcal K})$,
    $\forall i\in\mathcal K$
\ENDFOR
\RETURN $\{\mathbf u_i^{\rm app}\}$ 
\end{algorithmic} 
\end{algorithm}

\subsection{Capability-Compatible Backup Maneuvers}
\label{subsec:backup}

To obtain a recovery action whose availability does not depend on the
candidate QP, we assign each robot an admissible state-feedback backup
policy.
The backup is chosen to bring the robot toward rest, so that once all
interacting robots have stopped, their relative configuration remains
unchanged.

For robot $i$, let
$\mathbf u_i^b=\pi_i^b(\mathbf x_i)$, where
$\pi_i^b(\mathbf x_i)\in\mathcal U_i(\mathbf x_i)$.
%
This backup is capability-dependent because heterogeneous robots
cannot all stop through the same control response.
For DI and UNI, the corresponding velocity components can be reduced
directly toward zero.
For DD, braking must remain consistent with its wheel constraints.
For CL and FO, the stopping maneuver must additionally remain inside
the speed-dependent steering envelope, and FO must maintain
nonnegative forward velocity.
Thus, the resulting braking and turning response depends on the robot
model, admissible control limits, and current state.

Starting from $\mathbf x_i$, repeated application of this policy
produces the backup rollout
\begin{equation}
\begin{aligned}
    \boldsymbol\phi_{i,0}^b(\mathbf x_i)
    &:=
    \mathbf x_i,\\
    \boldsymbol\phi_{i,k+1}^b(\mathbf x_i)
    &:=
    F_i
    \left(
        \boldsymbol\phi_{i,k}^b(\mathbf x_i),
        \pi_i^b
        \left(
            \boldsymbol\phi_{i,k}^b(\mathbf x_i)
        \right)
    \right).
\end{aligned}
\label{eq:backup_rollout}
\end{equation}
We assume that the backup remains admissible along this rollout and
reaches a stationary state in finite time. For a prescribed backup
policy, a finite stopping-time bound can be obtained from the
model-specific actuation limits and control period.

Let $K_i^{\rm stop}(\mathbf x_i)$ denote the number of steps required
for robot $i$ to reach rest, and assume the uniform bound
$K^{\rm stop}:=\sup_{i,\,\mathbf x_i\in\mathcal X_i^{\rm adm}}
K_i^{\rm stop}(\mathbf x_i)<\infty$.
We select $H\geq K^{\rm stop}+1$ so that the complete stopping
maneuver and the resulting stationary state are included in the
finite rollout.

Admissibility of the backup alone is not sufficient for pairwise
safety.
Two robots may each brake within their physical limits and still
violate their required separation before coming to rest.
We therefore evaluate the two backup trajectories jointly and certify
the states from which these prescribed recovery maneuvers remain safe.

\subsection{Backup-Recoverability Certification}
\label{subsec:backup_recoverability}

For pair $(i,j)$, define the joint backup state at rollout step $k$ as 
$\boldsymbol\Phi_{ij,k}^b(\mathbf x)
    :=
    \left(
        \boldsymbol\phi_{i,k}^b(\mathbf x_i),
        \boldsymbol\phi_{j,k}^b(\mathbf x_j)
    \right).$
%
%
Geometric separation alone indicates whether the robots overlap, but
does not characterize how rapidly the separation margin is being
lost.
We therefore evaluate both $h_{ij}$ and the first-layer barrier
quantity $\psi_{ij}^{B}:=\dot h_{ij}+\lambda_B h_{ij}$, where
$\lambda_B>0$.
The first quantity monitors physical separation, whereas
$\psi_{ij}^{B}$ additionally accounts for the pair's closing motion.

We summarize the complete stopping trajectory by the minimum margins \vspace{-2pt}
\begin{equation}
    B_{ij}^{h}(\mathbf x)
    :=
    \min_{0\leq k\leq H-1}
    h_{ij}
    \left(
        \boldsymbol\Phi_{ij,k}^b(\mathbf x)
    \right),
    \label{eq:backup_h_margin}
\end{equation}
and
\begin{equation}
    B_{ij}^{\psi}(\mathbf x)
    :=
    \min_{0\leq k\leq H-1}
    \psi_{ij}^{B}
    \left(
        \boldsymbol\Phi_{ij,k}^b(\mathbf x)
    \right).
    \label{eq:backup_psi_margin}
\end{equation}
Hence,
$B_{ij}^{h}\geq0$ requires the prescribed separation to hold
throughout recovery, while
$B_{ij}^{\psi}\geq0$ requires the first-layer barrier condition to
remain valid along the same trajectory.

We define the corresponding recoverable set as follows.

\begin{definition}[Backup-Recoverable Set]
\label{def:backup_recoverable_set}
The backup-recoverable set for pair $(i,j)$ is
\begin{equation}
    \mathcal C_{ij}^{B}
    :=
    \left\{
        \mathbf x
        \;\middle|\;
        B_{ij}^{h}(\mathbf x)\geq0,\;
        B_{ij}^{\psi}(\mathbf x)\geq0
    \right\}.
    \label{eq:backup_recoverable_set}
\end{equation}
\end{definition}

Membership in $\mathcal C_{ij}^{B}$ has a direct interpretation:
starting from the current state, both robots can initiate their
prescribed admissible backup responses and maintain the certified
pairwise safety conditions until they stop.
Because the rollout
$\boldsymbol\Phi_{ij,k}^{b}$
is generated using each robot's actual model- and state-dependent
backup response, this certificate is capability-dependent.
Robots in the same relative geometry can therefore have different
recoverability margins when their braking or turning capabilities
differ.
The certificate is also policy-dependent: a state outside
$\mathcal C_{ij}^{B}$ is not certified by the prescribed backups,
although another recovery maneuver may exist.

A central property of this set is that the certified backup preserves
its own recoverability.
Let
$F_{ij}^{b}(\mathbf x)
    :=
    \left(
        F_i(\mathbf x_i,\pi_i^b(\mathbf x_i)),
        F_j(\mathbf x_j,\pi_j^b(\mathbf x_j))
    \right)$
%
%
%
denote one joint backup step.
Because the subsequent rollout is the original rollout shifted by one
step and the terminal state is stationary,
\begin{equation}
    B_{ij}^{h}
    \left(
        F_{ij}^{b}(\mathbf x)
    \right)
    \geq
    B_{ij}^{h}(\mathbf x),
    \quad
    B_{ij}^{\psi}
    \left(
        F_{ij}^{b}(\mathbf x)
    \right)
    \geq
    B_{ij}^{\psi}(\mathbf x).
    \label{eq:backup_margin_shift}
\end{equation}

This gives the following result.
\begin{lemma}[Recursive Backup Recoverability]
\label{lem:backup_preservation}
If $\mathbf x\in\mathcal C_{ij}^{B}$, then
$F_{ij}^{b}(\mathbf x)\in\mathcal C_{ij}^{B}$.
\end{lemma}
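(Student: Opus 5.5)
The plan is to prove the shift inequalities \eqref{eq:backup_margin_shift} directly from the rollout definition \eqref{eq:backup_rollout}; the lemma then follows at once from Definition~\ref{def:backup_recoverable_set}. Write $\mathbf x^+ := F_{ij}^{b}(\mathbf x)$.

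\textbf{Step 1: semigroup identity.} First I would show that
\[
\boldsymbol\phi_{i,k}^b(\mathbf x_i^+) = \boldsymbol\phi_{i,k+1}^b(\mathbf x_i) \quad \text{for all } k\ge 0,
\]
and likewise for $j$. Hence $\boldsymbol\Phi_{ij,k}^b(\mathbf x^+) = \boldsymbol\Phi_{ij,k+1}^b(\mathbf x)$. This is a routine induction on $k$. The base case is the definition of $F_{ij}^b$. The inductive step applies $F_i(\cdot,\pi_i^b(\cdot))$ to both sides, using that $\pi_i^b$ is a state-feedback map and so depends only on the current state. I would also note that the assumed admissibility of the backup along the rollout keeps $\mathbf x_i^+ \in \mathcal X_i^{\rm adm}$, so the rollout from $\mathbf x^+$ is well defined.

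\textbf{Step 2: terminal stationarity.} Next I need the evaluation at the new last index $k=H-1$, which corresponds to index $H$ of the original rollout and lies outside the original minimum. Since $H\ge K^{\rm stop}+1$, both robots are at rest by step $K^{\rm stop}\le H-1$. At rest the backup keeps the state stationary, so $\boldsymbol\Phi_{ij,H}^b(\mathbf x)=\boldsymbol\Phi_{ij,H-1}^b(\mathbf x)$. This is the step I expect to need care. The paper only asserts that the robots ``reach a stationary state''. I would read this as saying that the stationary state is a fixed point of $F_i(\cdot,\pi_i^b(\cdot))$, so the braking policy outputs zero acceleration at zero velocity and the pose stays frozen. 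With $\boldsymbol\nu_i=\boldsymbol\nu_j=\mathbf 0$ at that state, we also get $\Delta\dot{\mathbf p}_{ij}=\mathbf 0$, so $\psi^B_{ij}=\lambda_B h_{ij}$ there. In any case, fixed-point stationarity alone makes both $h_{ij}$ and $\psi^B_{ij}$ take identical values at indices $H-1$ and $H$.

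\textbf{Step 3: compare the minima.} Combining the two steps gives
\[
B_{ij}^{h}(\mathbf x^+) = \min_{1\le k\le H} h_{ij}\big(\boldsymbol\Phi_{ij,k}^b(\mathbf x)\big) = \min_{1\le k\le H-1} h_{ij}\big(\boldsymbol\Phi_{ij,k}^b(\mathbf x)\big) \ge B_{ij}^{h}(\mathbf x).
\]
The second equality holds because the index-$H$ term duplicates the index-$(H-1)$ term. The inequality holds because the remaining minimum is over a subset of $\{0,\dots,H-1\}$. The same chain applies verbatim to $B^{\psi}_{ij}$, which yields \eqref{eq:backup_margin_shift}.

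\textbf{Conclusion.} If $\mathbf x\in\mathcal C^B_{ij}$, then both margins at $\mathbf x$ are nonnegative. By \eqref{eq:backup_margin_shift} both margins at $\mathbf x^+$ are nonnegative, so $\mathbf x^+\in\mathcal C^B_{ij}$.
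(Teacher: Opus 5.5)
Your proposal is correct and matches the paper's proof: the paper also argues that the shifted rollout reuses the original states at indices $1,\dots,H-1$, and that since $H\geq K^{\rm stop}+1$ the single new terminal state is stationary and repeats the values of $h_{ij}$ and $\psi_{ij}^{B}$ at index $H-1$, so neither minimum can decrease. Your Step~1 induction and your explicit reading of ``stationary'' as a fixed point of $F_i(\cdot,\pi_i^b(\cdot))$ just make precise what the paper leaves implicit.
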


\begin{proof}
After one joint backup step, every nonterminal state of the new
backup rollout is a state already contained in the previous rollout.
Since $H>K^{\rm stop}$, the additional terminal state is stationary
and therefore has the same values of
$h_{ij}$ and $\psi_{ij}^{B}$ as the preceding stationary state.
Consequently, neither minimum in
\eqref{eq:backup_h_margin}--\eqref{eq:backup_psi_margin}
can decrease, which gives
\eqref{eq:backup_margin_shift} and proves the result.
\end{proof}

Lemma~\ref{lem:backup_preservation} provides CA-BRF with a known
recoverable endpoint.
Rather than applying this endpoint at every step, we next use it to
retain as much of the desired candidate as possible.

\subsection{Consistent Recoverability-Preserving Filtering}
\label{subsec:recoverability_filter}


{The backup policies provide a known recovery option, but applying them
throughout an interaction would unnecessarily sacrifice navigation
performance. Given the candidate controls
$\mathbf u^{\rm cand}
=(\mathbf u_1^{\rm cand},\ldots,\mathbf u_N^{\rm cand})$
from Section~\ref{subsec:hocbf_nominal_controller}, CA-BRF therefore preserves
the backup as a fallback while retaining as much of the desired action as
recoverability permits. This tradeoff must be resolved consistently across
simultaneous interactions. Because a robot shared by several interacting pairs
can execute only one control, all associated recovery conditions must be
satisfied under that same action. We capture this coupling through connected
interaction components and perform the following selection for each component.}

At each control step, we represent the interactions requiring
recoverability certification by the graph
$\mathcal G_k=(\mathcal I,\mathcal E_k)$.
An edge $(i,j)$, with $i<j$, is active when
$\|\mathbf p_i^k-\mathbf p_j^k\|-R_{ij}<d_{\rm act}$,
where $d_{\rm act}>0$ is the activation range.
Robots connected through active edges are coupled by their pairwise
recovery conditions.
Let $\mathcal K$ denote a connected component of $\mathcal G_k$ and
$\mathcal E_{\mathcal K}$ its active edges.
Processing each component separately couples all decisions that share
a robot while leaving independent interactions without distraction.

For each component, CA-BRF interpolates between the candidate and
backup controls,
\begin{equation}
    \mathbf u_i(\lambda)
    :=
    \mathbf u_i^b
    +
    \lambda
    \left(
        \mathbf u_i^{\rm cand}-\mathbf u_i^b
    \right),
    \quad
    i\in\mathcal K,
    \label{eq:control_interpolation}
\end{equation}
where $\lambda=1$ gives the full candidate and $\lambda=0$ gives the
full backup.
The search is restricted to the finite ordered set
$\Lambda
    :=
    \{1=\lambda_0>\lambda_1>\cdots>\lambda_M=0\}.$
%
%
Thus, the common scalar $\lambda$ coordinates the amount of
intervention within a connected interaction component, while the
endpoints
$\mathbf u_i^{\rm cand}$ and $\mathbf u_i^b$
remain robot-specific.

For a candidate fraction $\lambda$, define the one-step successor
$\mathbf x_i^{+}(\lambda)
    :=
    F_i
    \left(
        \mathbf x_i,
        \mathbf u_i(\lambda)
    \right).$
%
%
CA-BRF then performs two checks.
First, the interpolated control must remain executable:
\begin{equation}
    A_i(\lambda)
    :=
    \mathbf 1
    \left[
        \mathbf u_i(\lambda)
        \in
        \mathcal U_i(\mathbf x_i)
    \right].
    \label{eq:control_admissibility_indicator}
\end{equation}
This explicit check is necessary because
$\mathcal U_i(\mathbf x_i)$ need not be convex; two admissible
endpoints do not in general imply that every interpolation between
them is admissible.

Second, the successor must preserve the certified recovery:
\begin{equation}
    P_{ij}(\lambda)
    :=
    \mathbf 1
    \left[
        \left(
            \mathbf x_i^{+}(\lambda),
            \mathbf x_j^{+}(\lambda)
        \right)
        \in
        \mathcal C_{ij}^{B}
    \right],
    \quad
    (i,j)\in\mathcal E_{\mathcal K}.
    \label{eq:recoverability_indicator}
\end{equation}
The successor check is essential.
A pair may be recoverable at the current state while an aggressive
candidate moves it to a state from which the prescribed backups can
no longer maintain safety.

A candidate fraction is accepted for component $\mathcal K$ only
when both conditions hold simultaneously:
\begin{equation}
    \mathcal M_{\mathcal K}(\lambda)
    :=
    \bigwedge_{i\in\mathcal K}
    A_i(\lambda)
    \;\wedge\;
    \bigwedge_{(i,j)\in\mathcal E_{\mathcal K}}
    P_{ij}(\lambda).
    \label{eq:component_mask}
\end{equation}
This is the verification step in
Line~11 of Algorithm~\ref{alg:cabrf}.
The common component-wise decision is important when one robot belongs
to several pairwise interactions: every certificate is evaluated
under the same control that the shared robot will actually execute.

CA-BRF selects
\begin{equation}
    \lambda_{\mathcal K}^{*}
    :=
    \max
    \left\{
        \lambda\in\Lambda
        \;\middle|\;
        \mathcal M_{\mathcal K}(\lambda)=1
    \right\},
    \label{eq:selected_lambda}
\end{equation}
and applies
$\mathbf u_i^{\rm app} = \mathbf u_i^b+\lambda_{\mathcal K}^{*} \left(\mathbf u_i^{\rm cand}-\mathbf u_i^b\right)$ with $i\in\mathcal K.$
%
%
The grid is evaluated explicitly in descending order, as shown in
Lines~9--16 of Algorithm~\ref{alg:cabrf}, and we make no monotonicity
assumption on feasibility with respect to $\lambda$.
Therefore,
$\lambda_{\mathcal K}^{*}$ is the largest verified candidate
fraction, not the result of a monotone line search.
A robot with no active edge directly executes
$\mathbf u_i^{\rm cand}$.

The inclusion of $\lambda=0$ in $\Lambda$ is particularly important.
At this endpoint, $\mathbf u_i(0)=\mathbf u_i^b$ for every robot in
the component. If the component is backup-recoverable,
Lemma~\ref{lem:backup_preservation} guarantees that the full backup
preserves every pairwise recoverability certificate. Moreover,
$\mathbf u_i(0)=\mathbf u_i^b\in\mathcal U_i(\mathbf x_i)$, so
$A_i(0)=1$ for all $i\in\mathcal K$. Thus, $\lambda=0$ provides an
explicit feasible endpoint independently of the candidate HOCBF-QP. \vspace{-5pt}

\subsection{Recursive Feasibility and Multi-Robot Safety}
\label{subsec:safety_guarantee}

We now analyze the system obtained by repeatedly applying the
component-wise decision above. \vspace{-2pt}

\begin{theorem}[Recursive Feasibility and Safety of CA-BRF]
\label{thm:safety}
Assume that
(i) all robots start from admissible states,
(ii) every initially active pair belongs to
$\mathcal C_{ij}^{B}$, and
(iii) every newly active pair belongs to
$\mathcal C_{ij}^{B}$ when it enters the active graph.
Then, at every control step, CA-BRF admits at least one feasible
$\lambda\in\Lambda$ for each active connected component.
Every applied control satisfies
$\mathbf u_i^{\rm app} \in \mathcal U_i(\mathbf x_i),$
and every active pair remains in
$\mathcal C_{ij}^{B}$.
Consequently, at every control update time $t_k=k\Delta t$, \vspace{-3pt}
\begin{equation}
    h_{ij}(\mathbf x^k)\geq0,
    \quad
    \forall i<j,\quad
    \forall k,
    \label{eq:sampled_safety}
\end{equation}
and the physical robot bodies remain non-overlapping by the
construction in Sec.~\ref{subsec:operational_safety}.
\end{theorem}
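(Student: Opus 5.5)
The argument is an induction on the control step $k$. The hypothesis at step $k$ is that every robot is in its admissible domain, $\mathbf x_i^k\in\mathcal X_i^{\rm adm}$, and that every pair active at step $k$ satisfies $\mathbf x^k\in\mathcal C_{ij}^{B}$. Assumptions (i)--(ii) give the base case $k=0$.

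\textbf{Feasibility and admissibility.} For the inductive step, fix an active component $\mathcal K$ and consider $\lambda=0$, so $\mathbf u_i(0)=\pi_i^b(\mathbf x_i^k)$ for all $i\in\mathcal K$. Since $\pi_i^b(\mathbf x_i)\in\mathcal U_i(\mathbf x_i)$, we get $A_i(0)=1$. Pairs inside a component only involve robots of $\mathcal K$, and every robot of the pair uses its backup. Hence the joint successor of each pair $(i,j)\in\mathcal E_{\mathcal K}$ equals $F_{ij}^b(\mathbf x^k)$, and Lemma~\ref{lem:backup_preservation} gives $P_{ij}(0)=1$. Thus $\mathcal M_{\mathcal K}(0)=1$, the set in \eqref{eq:selected_lambda} is nonempty, and $\lambda_{\mathcal K}^*$ exists.

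Because the selected $\lambda_{\mathcal K}^*$ passes the $A_i$ check, $\mathbf u_i^{\rm app}\in\mathcal U_i(\mathbf x_i^k)$ for every $i\in\mathcal K$. Robots with no active edge apply $\mathbf u_i^{\rm cand}$, which is admissible by \eqref{eq:candidate_projection}. The standing invariance assumption on $F_i$ then gives $\mathbf x_i^{k+1}\in\mathcal X_i^{\rm adm}$.

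\textbf{Propagating recoverability.} Every pair active at step $k$ satisfies $\mathbf x^{k+1}\in\mathcal C_{ij}^B$, because $P_{ij}(\lambda^*_{\mathcal K})=1$ certifies exactly the successor that is executed. This is the key consistency point: each robot belongs to exactly one component and applies exactly the control $\mathbf u_i(\lambda^*_{\mathcal K})$ used in the check, so no pairwise certificate refers to a different action.

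Pairs active at step $k+1$ fall into two cases. If the pair was also active at step $k$, the previous sentence applies. If it is newly active, assumption (iii) supplies membership. This closes the induction.

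\textbf{Safety.} For an active pair, $B_{ij}^h(\mathbf x^k)\ge 0$ and the $k=0$ term of the rollout minimum is $h_{ij}(\mathbf x^k)$, so $h_{ij}(\mathbf x^k)\ge0$. For an inactive pair, the activation rule gives $\|\mathbf p_i^k-\mathbf p_j^k\|-R_{ij}\ge d_{\rm act}>0$, hence $h_{ij}>0$. Non-overlap then follows from the triangle-inequality argument of Sec.~\ref{subsec:operational_safety}.

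\textbf{Main obstacle.} The delicate part is the bookkeeping for pairs that change activity status. A pair active at $k$ and still active at $k+1$ is covered by the successor check. A pair that deactivates and later reactivates is covered only through assumption (iii), so the proof should state explicitly that (iii) applies at every entry time, not only the first. A second point to state carefully is that components are disjoint, so different components can be filtered independently: the recovery certificate for a pair depends only on the controls of the two robots in that pair.
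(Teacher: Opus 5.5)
Your proposal is correct and follows the paper's proof essentially step for step: $\lambda=0$ is always feasible because $\pi_i^b(\mathbf x_i)\in\mathcal U_i(\mathbf x_i)$ and Lemma~\ref{lem:backup_preservation} applies, the selected $\lambda^*_{\mathcal K}$ certifies exactly the executed successor, assumption (iii) covers newly active pairs, and the $k=0$ rollout term together with the activation threshold gives $h_{ij}\ge 0$. Your explicit induction hypothesis, which also tracks invariance of $\mathcal X_i^{\rm adm}$ (needed for the uniform stopping bound behind the lemma), simply makes precise the paper's ``repeating this argument at successive control updates'' step.
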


\begin{proof}
The grid $\Lambda$ contains $\lambda=0$. For this value,
$\mathbf u_i(0)=\mathbf u_i^b\in\mathcal U_i(\mathbf x_i)$ for every
robot in an active component. If an active pair belongs to
$\mathcal C_{ij}^{B}$, Lemma~\ref{lem:backup_preservation} guarantees
that one joint backup step leaves the pair in
$\mathcal C_{ij}^{B}$. Hence,
$\mathcal M_{\mathcal K}(0)=1$ for every backup-recoverable component,
and the feasible set in \eqref{eq:selected_lambda} is nonempty.

The selected $\lambda_{\mathcal K}^{*}$ passes every admissibility and
successor-recoverability check in \eqref{eq:component_mask}.
Therefore, applied controls in active components are admissible and
every active pair remains in $\mathcal C_{ij}^{B}$ after the control
update. Robots with no active edge directly execute
$\mathbf u_i^{\rm cand}\in\mathcal U_i(\mathbf x_i)$.
Newly active pairs satisfy the same recoverability property by
assumption. Repeating this argument at successive control updates
establishes recursive filter feasibility and preservation of the
backup-recoverable sets.

Finally, the backup rollout in \eqref{eq:backup_h_margin} includes the
current state at $k=0$. Thus,
$\mathbf x\in\mathcal C_{ij}^{B}$ implies
$h_{ij}(\mathbf x)\geq0$. An inactive pair satisfies
$\|\mathbf p_i-\mathbf p_j\|-R_{ij}\geq d_{\rm act}>0$ and is also
safe. Hence, \eqref{eq:sampled_safety} holds for every pair at every
control update.
\end{proof}
\vspace{-5pt}

Theorem~\ref{thm:safety} establishes recursive feasibility of the
\emph{CA-BRF filter}, not of the candidate HOCBF-QP, and guarantees
safety at each control-update instant. The HOCBF-QP generates the
desired candidate control, whereas CA-BRF certifies the final applied
control and provides the recursive safety guarantee.
Finally, the component-wise check admits a distributed realization.
Each robot can evaluate the admissibility vector
$\{A_i(\lambda)\}_{\lambda\in\Lambda}$, while each active pair
evaluates its recoverability vector
$\{P_{ij}(\lambda)\}_{\lambda\in\Lambda}$.
Combining these binary vectors over a connected component yields
$\mathcal M_{\mathcal K}(\lambda)$ and therefore the common
$\lambda_{\mathcal K}^{*}$ used by all robots in that component. \vspace{-3pt}

\section{Experiments}
\label{sec:experiments}
\vspace{-0.3em}


\begin{table*}[!t]
\centering

\begin{minipage}[t]{0.72\textwidth}
\centering

\begin{minipage}[t][2\baselineskip][t]{\linewidth}
\captionof{table}{
Heterogeneous multi-robot random start--goal navigation across team sizes
($N\in\{10,20,30\}$, 50 trials each).
}
\label{tab:main_results}
\end{minipage}

\footnotesize
\setlength{\tabcolsep}{2.2pt}
\renewcommand{\arraystretch}{1.08}

\begin{tabular*}{\linewidth}{
    @{\extracolsep{\fill}}
    l cccc cccc cccc
    @{}
}
\toprule
& \multicolumn{4}{c}{$N=10$}
& \multicolumn{4}{c}{$N=20$}
& \multicolumn{4}{c}{$N=30$} \\
\cmidrule(lr){2-5}
\cmidrule(lr){6-9}
\cmidrule(lr){10-13}

\textbf{Method}
& \texttt{SR}$\uparrow$
& \texttt{AR}$\uparrow$
& \texttt{SVR}$\downarrow$
& \texttt{Pen.}$\downarrow$
& \texttt{SR}$\uparrow$
& \texttt{AR}$\uparrow$
& \texttt{SVR}$\downarrow$
& \texttt{Pen.}$\downarrow$
& \texttt{SR}$\uparrow$
& \texttt{AR}$\uparrow$
& \texttt{SVR}$\downarrow$
& \texttt{Pen.}$\downarrow$
\\
\midrule

\textbf{CA-BRF (Ours)}
& \cellcolor{gray!25}\textbf{98.0}
& \cellcolor{gray!10}98.0
& \cellcolor{gray!25}\textbf{0.0}
& \cellcolor{gray!25}\textbf{0.000}
& \cellcolor{gray!25}\textbf{92.0}
& \cellcolor{gray!10}92.0
& \cellcolor{gray!25}\textbf{0.0}
& \cellcolor{gray!25}\textbf{0.000}
& \cellcolor{gray!25}\textbf{85.6}
& \cellcolor{gray!10}85.6
& \cellcolor{gray!25}\textbf{0.0}
& \cellcolor{gray!25}\textbf{0.000}
\\

BRF-Shared
& 96.8 & 97.0
& \cellcolor{gray!10}2.0
& \cellcolor{gray!10}0.001
& \cellcolor{gray!10}88.8
& 89.1
& \cellcolor{gray!10}12.0
& \cellcolor{gray!10}0.005
& 74.9 & 75.8
& \cellcolor{gray!10}28.0
& \cellcolor{gray!10}0.004
\\

HOCBF (w/o BRF)
& \cellcolor{gray!10}97.6
& \cellcolor{gray!10}98.0
& \cellcolor{gray!10}2.0
& 0.058
& 88.4
& \cellcolor{gray!25}\textbf{94.2}
& 52.0
& 0.055
& \cellcolor{gray!10}79.5
& \cellcolor{gray!25}\textbf{90.2}
& 86.0
& 0.072
\\

Backup-CBF~\cite{chen2020guaranteed_backup}
& \cellcolor{gray!10}97.6
& \cellcolor{gray!25}\textbf{98.6}
& 6.0 & 0.023
& 83.2 & 86.1 & 44.0 & 0.022
& 58.1 & 60.5 & 84.0 & 0.029
\\

sRCBF~\cite{kim2025srcbf}
& 54.8 & 55.8 & 10.0 & 0.332
& 19.4 & 21.7 & 90.0 & 0.428
& 5.1 & 6.7 & 100.0 & 0.724
\\
\bottomrule
\end{tabular*}
\end{minipage}
\hfill
\begin{minipage}[t]{0.27\textwidth}
\centering

\begin{minipage}[t][3.6\baselineskip][t]{\linewidth}
\captionof{table}{
Mean number of pairwise safety-violation events per trial.
}
\label{tab:violation_events}
\end{minipage}

\footnotesize
\setlength{\tabcolsep}{1.5pt}
\renewcommand{\arraystretch}{1.08}

\begin{tabular*}{\linewidth}{
    @{\extracolsep{\fill}}
    l ccc
    @{}
}
\toprule
\textbf{Method}
& $N=10$
& $N=20$
& $N=30$
\\
\midrule

\textbf{CA-BRF}
& \textbf{0.00}
& \textbf{0.00}
& \textbf{0.00}
\\

BRF-Shared
& 0.02 & 0.12 & 0.30
\\

HOCBF
& 0.02 & 1.04 & 3.70
\\

Backup-CBF
& 0.06 & 0.70 & 1.70
\\

sRCBF
& 0.10 & 2.00 & 8.70
\\
\bottomrule
\end{tabular*}
\end{minipage}

\vspace{3pt}

\begin{minipage}{\textwidth}
\footnotesize
\texttt{SR}: safe success rate;
\texttt{AR}: arrival rate;
\texttt{SVR}: percentage of trials with at least one safety-set violation;
\texttt{Pen.}: mean maximum penetration over trials with safety violations
(zero when no violation occurs).
\textbf{Bold} with \colorbox{gray!25}{dark shading} indicates the best result;
\colorbox{gray!10}{light shading} indicates the second best.
\end{minipage}

\vspace{-1.2em}
\end{table*}

In this section, we evaluate the safety and navigation performance of
CA-BRF under increasingly dense heterogeneous multi-robot interactions
and analyze the contributions of capability-aware recovery evaluation,
recovery certification, and component-wise consistent filtering.
Specifically, we evaluate CA-BRF through
(i) random start--goal navigation with up to $N=30$ heterogeneous
robots against representative safety controllers,
(ii) targeted ablation studies that isolate these three design
components, and
(iii) a five-robot physical experiment involving all five kinematic
classes.

\begin{figure}[t]
    \captionsetup{skip=2pt}
    \centering
    \includegraphics[width=0.48\textwidth]
    {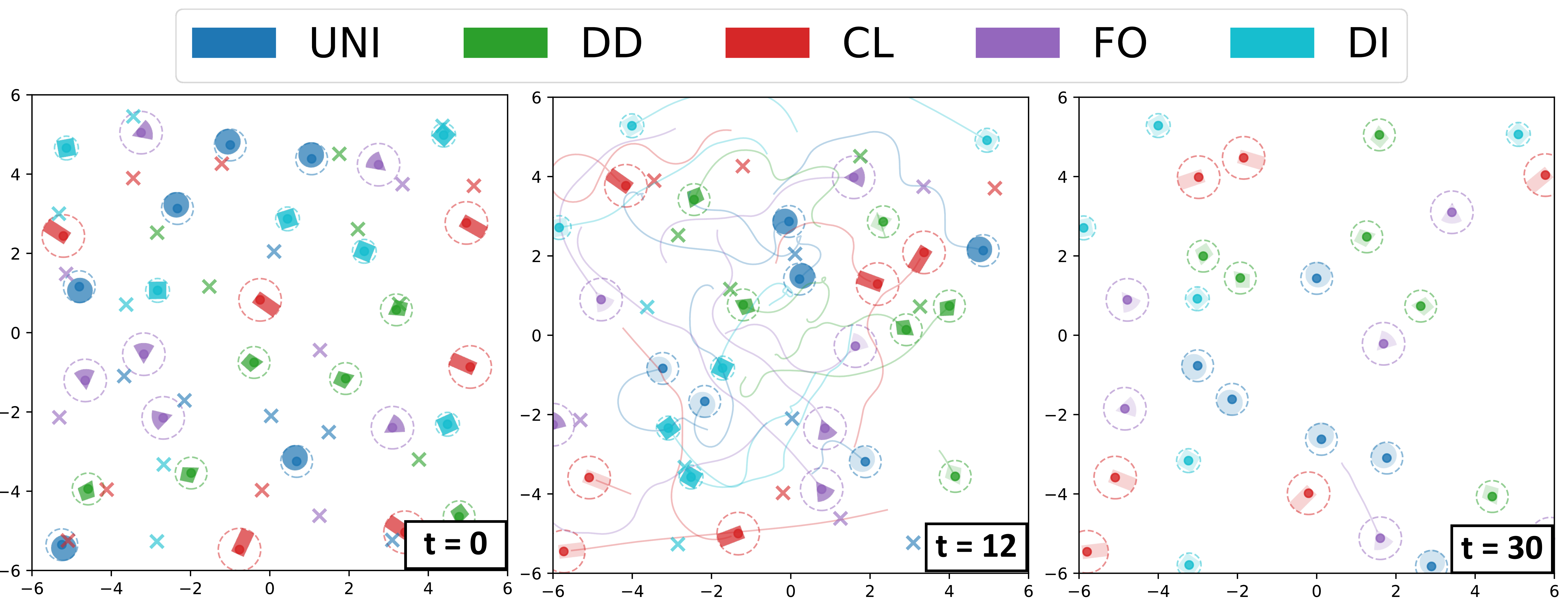}
    \caption{Random start--goal navigation with $N=30$ heterogeneous robots
(six per model) at multiple timestamps. Robots start from randomly
sampled positions, and $\times$ markers denote their assigned goals.
UNI, DI, DD, CL, and FO robots are shown as blue circles, light-blue
circles, green diamonds, red rectangles, and purple sectors,
respectively. Dashed outlines show the operational-space safety
radii $r_i$ centered at the corresponding operational points
(Sec.~\ref{sec:problem}).
    }
    \label{fig:random_start_goal}
    \vspace{-1.5em}
\end{figure}

\subsubsection*{\textbf{Experiment Settings}}


We consider random start--goal navigation in an obstacle-free
two-dimensional plane.
Initial and goal positions are sampled from an
$11\times11\,\mathrm{m}^2$ region, with minimum start--goal and
initial inter-robot distances of $4.0$~m and $1.5$~m, respectively
(Fig.~\ref{fig:random_start_goal}).
We evaluate teams of $N\in\{10,20,30\}$ robots, evenly distributed
across the five kinematic classes, over $50$ random trials per team
size.\footnote{Each trial runs for at most $1000$ steps with
$\Delta t=0.05$~s; arrival is declared within $0.3$~m of the goal.
All classes use $v_{\max}=1.0$~m/s. DI has
$a_{\max}=2.0$~m/s$^2$, and UNI and CL/FO have
$a_v=2.0$~m/s$^2$. We use $x_r=0.1$~m for UNI/DD and $x_r=0.2$~m for CL/FO, based on
their body geometry. For CL/FO, $L=0.5$~m,
$\phi_{\max}=40^\circ$, and $\dot{\phi}_{\max}=10$~rad/s.}



We use fixed CA-BRF parameters throughout the evaluation.
\footnote{We set $\lambda_1=\lambda_2=2.0$ and solve all
QPs using OSQP~\cite{stellato2020osqp}. For CA-BRF,
$d_{\rm act}=3.0$~m, $\lambda_B=2.0$, and $H=40$ steps, satisfying
$H\geq\bar K_{\rm stop}+1$. The interpolation grid is
$\Lambda=\{1,0.9,0.85,\ldots,0.05,0\}$.}
To isolate differences introduced by the safety mechanisms, all
methods use the same decentralized artificial potential field (APF)
nominal controller~\cite{khatib1986apf}. Its desired
operational-space velocity is converted to model-specific controls
through proportional tracking before each method applies its respective
safety mechanism.


\subsubsection*{\textbf{Evaluation Metrics}}

We evaluate navigation performance and safety using four metrics.
\texttt{Safe success rate} (\texttt{SR}, \%) is the percentage of
robots that reach their goals without violating the prescribed
pairwise safety condition.
\texttt{Arrival rate} (\texttt{AR}, \%) is the percentage of robots
that reach their goals within $T_{\max}$, regardless of safety
violations.
\texttt{Safety violation rate} (\texttt{SVR}, \%) is the percentage
of trials in which $h_{ij}<0$ for at least one pair at any control
update.
\texttt{Penetration depth} (\texttt{Pen.}, m) measures the severity
of these violations.
For each violating trial, we compute
$d_{\rm pen}:=\max_{i<j,k}
\max\!\left(0,R_{ij}-\|\mathbf p_i^k-\mathbf p_j^k\|\right)$
and report its mean over violating trials; \texttt{Pen.} is reported
as zero when no violation occurs.
We additionally report the mean number of distinct pairwise
safety-violation events per trial in
Table~\ref{tab:violation_events}.

\subsubsection*{\textbf{Comparison Methods}}
\label{subsubsec:comparison}


We compare CA-BRF with four methods.
\textbf{1) BRF-Shared} is an ablation that retains the HOCBF candidate
controller, admissibility checks, and backup-recoverability filter,
but uses a shared capability-agnostic operational-space braking model
instead of model-specific backup rollouts for recovery certification.
It isolates the effect of modeling each robot's recovery capability.
\textbf{2) HOCBF (w/o BRF)} uses the same operational-space HOCBF
candidate controller but directly applies its output, isolating the
effect of backup-recoverability filtering beyond instantaneous HOCBF
enforcement.
\textbf{3) Backup-CBF}~\cite{chen2020guaranteed_backup} implements a
decentralized backup-CBF controller with model-specific backup
trajectories. It constructs QP constraints from local linearizations
of backup-trajectory safety, whereas CA-BRF directly evaluates
successor backup recoverability.
\textbf{4) sRCBF}~\cite{kim2025srcbf} applies the smooth Robust CBF to
each robot's canonical representation, treating neighbors as
worst-case dynamic obstacles bounded by their maximum velocities.
Unlike CA-BRF, it models neither capability-dependent neighbor
recovery nor reciprocal avoidance. \vspace{-6pt}

\subsection{Random Start--Goal Navigation}
\label{subsec:navigation} \vspace{-3pt}

Tables~\ref{tab:main_results} and~\ref{tab:violation_events} summarize the navigation performance and pairwise safety-violation events. CA-BRF achieves the highest safe success rate (\texttt{SR}) at every team size, attaining $98.0\%$, $92.0\%$, and $85.6\%$ at $N=10$, $20$, and $30$, respectively. It incurs no safety violations over all $150$ trials while maintaining high arrival rates as the team size increases.

The performance gap widens in dense interactions. At $N=30$, BRF-Shared attains $74.9\%$ \texttt{SR} and $75.8\%$ \texttt{AR}, with a $28\%$ safety-violation rate, indicating that a shared recovery prediction becomes increasingly unreliable for coupled heterogeneous interactions. HOCBF (w/o BRF) attains a higher \texttt{AR} of $90.2\%$, $4.6$ percentage points above CA-BRF, but its \texttt{SR} falls to $79.5\%$, with an $86\%$ safety-violation rate and $3.70$ violation events per trial. The latter result shows that the HOCBF candidate controller alone does not provide the same closed-loop safety as explicitly preserving backup recoverability.



Backup-CBF likewise degrades with team size, reaching $58.1\%$
\texttt{SR}, $60.5\%$ \texttt{AR}, and $1.70$ violation events per
trial at $N=30$. This degradation indicates that locally linearized
backup-trajectory constraints can become inadequate in dense
interactions, supporting the benefit of directly evaluating successor
backup recoverability. sRCBF degrades most severely: its \texttt{SR}
decreases from $54.8\%$ at $N=10$ to $5.1\%$ at $N=30$, where it
yields $6.7\%$ \texttt{AR}, a $100\%$ safety-violation rate, and
$8.70$ violation events per trial. Treating neighbors solely as
worst-case dynamic obstacles is therefore overly restrictive in these
dense heterogeneous interactions. \vspace{-3pt}



\subsection{Effect of Capability-Aware Recovery Modeling}
\label{subsec:capability_recovery}

We next isolate the effect of capability-aware recovery modeling through a same-state certificate analysis. At states and candidate controls visited by CA-BRF, we reevaluate the same interpolation grid using model-specific and shared recovery rollouts, changing only the rollout used to assess recoverability. This removes differences caused by the methods visiting different closed-loop trajectories. An \emph{optimistic mismatch} occurs when BRF-Shared accepts a candidate fraction rejected by the model-specific test; a \emph{pessimistic mismatch} denotes the converse.

As shown in Table~\ref{tab:capability_analysis}, the shared recovery model changes the selected filter decision in $4.74\%$ of active components at $N=20$ and $11.61\%$ at $N=30$. The discrepancies are strongly asymmetric: BRF-Shared accepts fractions rejected by the model-specific test in $4.58\%$ and $11.25\%$ of components, whereas the reverse occurs in only $0.21\%$ and $0.48\%$, respectively.
The asymmetry is relevant to the stated guarantee. When BRF-Shared
accepts a fraction rejected by the model-specific test, its
shared-model certificate does not establish the recoverability
condition required in the recursive argument of
Theorem~\ref{thm:safety}. This does not itself imply a collision, but
Theorem~\ref{thm:safety} cannot be invoked for that decision.
\vspace{-15pt}



\begin{table}[t]
\centering
\caption{
Same-state comparison of model-specific and shared recovery
evaluation. All controller components are fixed except for the
backup rollout used to evaluate recoverability.
}
\label{tab:capability_analysis}
\setlength{\tabcolsep}{4.5pt}
\renewcommand{\arraystretch}{1.08}
\begin{tabular}{lcc}
\toprule
Metric & $N=20$ & $N=30$ \\
\midrule
Decision differs (\%)
& 4.74 & 11.61 \\
Shared accepts / CA-BRF rejects (\%)
& 4.58 & 11.25 \\
Shared rejects / CA-BRF accepts (\%)
& 0.21 & 0.48 \\
\bottomrule
\end{tabular} \vspace{-1.5em}
\end{table}

\subsection{Role of the Recovery Certificate}
\label{subsec:recovery_certificate}

We assess the effectiveness of the two terms, $B^h$ and $B^\psi$, in
the recovery certificate. We evaluate the two certificate definitions
along the same HOCBF-only trajectories, without applying
backup-recoverability filtering, and measure whether a certificate violation precedes a subsequent pairwise
safety-set violation. Using identical
trajectories isolates the additional warning information provided by
$B^\psi$.

In Table~\ref{tab:recovery_analysis}, using $B^h$ alone
detects only $67.0$--$76.9\%$ of subsequent safety violations, with
median lead times of $0.10$--$0.15$~s. Including $B^\psi$ increases
recall to $97.8$--$100.0\%$ and the median lead time to $0.85$~s
across both team sizes.
The two terms capture complementary aspects of recovery.
$B^h$ checks whether the prescribed backup trajectory maintains the
required geometric separation.
$B^\psi$ additionally accounts for closing motion along that
trajectory and can therefore indicate loss of the certified recovery
condition before the geometric margin itself becomes critical.
The higher recall and longer lead time support using both terms to
define $\mathcal C_{ij}^{B}$. \vspace{-15pt}

\begin{table}[t]
\centering
\caption{
Early-warning ability of the recovery certificate on HOCBF-only
trajectories.
Recall is the fraction of safety-buffer breaches preceded by a
certificate violation within the previous $40$ control steps
($2.0$\,s), and lead time is the median time from certificate
violation to breach.
}
\label{tab:recovery_analysis}
\setlength{\tabcolsep}{3.5pt}
\renewcommand{\arraystretch}{1.05}
\begin{tabular}{lcccc}
\toprule
& \multicolumn{2}{c}{$N=20$}
& \multicolumn{2}{c}{$N=30$} \\
\cmidrule(lr){2-3}\cmidrule(lr){4-5}
Certificate
& Recall (\%) & Lead (s)
& Recall (\%) & Lead (s) \\
\midrule
$B^h$ only
& 76.9 & 0.15
& 67.0 & 0.10 \\
$B^h$ and $B^\psi$
& 100.0 & 0.85
& 97.8 & 0.85 \\
\bottomrule
\end{tabular}  \vspace{-2em}
\end{table}

\subsection{Effect of Component-Wise Consistent Filtering}
\label{subsec:component_consensus} \vspace{-3pt}

We evaluate the need for a common interpolation factor
$\lambda_{\mathcal K}$ within each connected interaction component.
We compare CA-BRF with an independent-selection variant in which each
robot selects its largest locally accepted factor $\lambda_i$, with all
other components unchanged. Under this variant, robot $i$ evaluates
each incident pair assuming both robots apply $\lambda_i$. Thus, when
neighbors select $\lambda_i\neq\lambda_j$, the executed control pair
need not be the pair evaluated by either robot. We call this a
\emph{composition failure} when both local decisions are accepted but
their executed pair fails the backup-recoverability test; cases in
which either robot applies the full backup are excluded.

As shown in Fig.~\ref{fig:component_failure}, independent selection
produces composition failures in $13$ of $50$ trials at $N=20$ and
$26$ of $50$ trials at $N=30$, whereas CA-BRF produces none.
Same-state replay reproduces these failures, isolating their cause to
independently combined decisions instead of closed-loop trajectory
differences. Every observed active-pair transition from
$\mathcal C_{ij}^{B}$ under independent selection is preceded by a
composition failure; none is observed under CA-BRF.
A common $\lambda_{\mathcal K}$ thus assesses every active pair
under its executed control combination as required by
Theorem~\ref{thm:safety}. This consistency increases full-backup use:
at $N=30$, CA-BRF applies it in $8.6\%$ of active robot control steps,
versus $0.7\%$ under independent selection
(Fig.~\ref{fig:component_backup}). The additional intervention
eliminates the observed composition failures and preserves a jointly
valid recovery certificate across coupled interactions. \vspace{-1pt}


\begin{figure}[!t]
    \centering
    \captionsetup[subfigure]{skip=-2pt}

    \includegraphics[width=0.5\columnwidth]
    {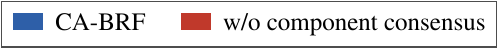}
    \vspace{0.5em}
    
    \begin{subfigure}[t]{0.49\columnwidth}
        \centering
        \includegraphics[width=\linewidth]
        {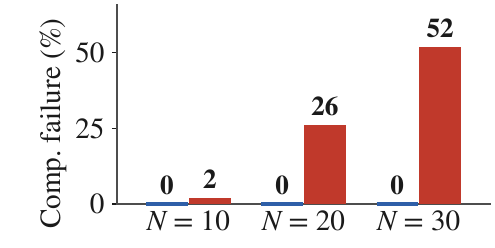}
        \caption{Inconsistent local decisions.}
        \label{fig:component_failure}
    \end{subfigure}
    \begin{subfigure}[t]{0.49\columnwidth}
        \centering
        \includegraphics[width=\linewidth]
        {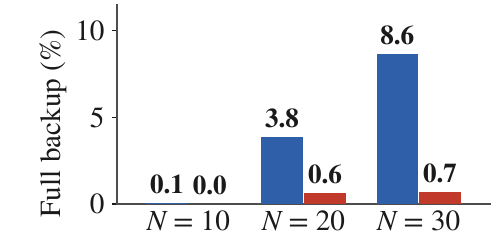}
        \caption{Full-backup activation.}
        \label{fig:component_backup}
    \end{subfigure}

    \caption{Effect of coordinating filter decisions across simultaneously interacting robots. CA-BRF uses one common candidate--backup interpolation factor within each connected interaction component, whereas the comparison variant selects factors independently. (a) Fraction of trials with composition failures, in which locally accepted decisions fail the pairwise backup-recoverability test when executed together. (b) Fraction of active robot control steps applying the full backup ($\lambda=0$). Coordinated selection removes the observed composition failures while requiring more frequent full-backup activation in dense interactions. }
    \label{fig:component_consensus} \vspace{-1.5em}
\end{figure} \vspace{-.2cm}

\subsection{Physical Robot Validation}
\label{subsec:physical}
\vspace{-3pt}

We conduct physical robot experiments with five LIMO Pro
robots~\cite{limopro}, configured to implement the five kinematic
models considered in this work (DI, UNI, DD, CL, and FO), to
demonstrate the applicability of CA-BRF in real-robot navigation.
We use $v_{\max}=0.3$~m/s. As shown in Fig.~\ref{fig:overview}, the robots are initialized symmetrically with goals on opposite sides of
the workspace, requiring passage through a shared central region. We
compare CA-BRF with HOCBF (w/o BRF) to assess whether
backup-recoverability filtering maintains the prescribed safety
condition in real-robot navigation. CA-BRF guides all robots to their
goals without collision, whereas directly applying the HOCBF candidate
controls causes a collision, demonstrating the filter's role in
preserving safe interactions.
 \vspace{-3pt}

\vspace{-.1cm}
\section{Conclusion}
\vspace{-.1cm}

This paper presented CA-BRF, a capability-aware
backup-recoverability filter for safe heterogeneous multi-robot
navigation. CA-BRF certifies model- and state-dependent admissible
backup maneuvers and consistently filters candidate controls across
simultaneous interactions. Under the stated conditions, it is
recursively feasible, applies admissible controls, and preserves
pairwise safety at every control update. Simulations with up to $30$
heterogeneous robots and physical-robot experiments demonstrate the
benefits of capability-aware recovery and consistent filtering.
Several limitations remain.
The guarantee is limited to discrete control-update instants; new
robot models require an admissible backup policy with a finite
stopping-time bound, and deadlock-free progress is not guaranteed.
Future work will address continuous-time safety, online model
adaptation, and deadlock-aware coordination. \vspace{-3pt}. \vspace{-3pt}



%
\bibliographystyle{IEEEtran} \footnotesize
\bibliography{references}

\end{document}